\documentclass[conference]{IEEEtran}
\IEEEoverridecommandlockouts

\usepackage{cite}
\usepackage{amsmath,amssymb,amsfonts}
\usepackage{algorithmic}
\usepackage{algorithm}
\usepackage{graphicx}
\usepackage{textcomp}
\usepackage{xcolor}
\usepackage{url}
\usepackage{booktabs}
\usepackage{multirow}
\usepackage{array}
\usepackage{amsthm}

\newtheorem{theorem}{Theorem}
\newtheorem{lemma}{Lemma}
\newtheorem{proposition}{Proposition}
\newtheorem{remark}{Remark}
\newtheorem{definition}{Definition}
\newtheorem{assumption}{Assumption}
\def\BibTeX{{\rm B\kern-.05em{\sc i\kern-.025em b}\kern-.08em
T\kern-.1667em\lower.7ex\hbox{E}\kern-.125\hbox{H}}}

\begin{document}

\title{Towards Certifiable State Estimation under Observability Degeneracy}

\author{\IEEEauthorblockN{Baoshan Song}
}

\maketitle
\begin{abstract}
Certifiable optimization provides global optimality guarantees for non-convex state estimation in robotics. However, under severe physical sensing degeneracy, convex Semidefinite Programming relaxations solved via Interior-Point Methods suffer from rank expansion, rendering direct semidefinite solving and spectral rounding algebraically invalid. In this paper, we establish a universal candidate-driven dual certification framework that resolves this challenge without incurring rank expansion. First, from an epigraph perspective, we prove that while sensing degeneracy expands the primal optimal set into a flat equi-cost face, cost tightness and multiplier recovery remain strictly intact, enabling global verification without direct semidefinite solving. Second, we propose an observability-aware local search using solution remapping, coupled with a closed-form dual multiplier recovery scheme that executes in sub-microsecond latency. Third, we derive a model-specific adaptive dual safety threshold parameterized by the minimum non-zero eigenvalue of the observable Hessian and uneradicated residual leakage, preventing false rejections of optimal candidates under weak observability. We validate our universal framework with robust registration under degenerate measurements. 
\end{abstract}

\section{Introduction}

Certifiable optimization algorithms have emerged as a foundational paradigm for safety-critical robotic systems. By relaxing non-convex optimization problems on matrix Lie groups into convex Semidefinite Programs, certifiable solvers can either return a globally optimal solution or provide an explicit mathematical certificate when global optimality cannot be guaranteed. These frameworks have revolutionized fundamental robotics tasks, including point cloud registration, pose graph optimization, and rotation synchronization.

Despite these significant mathematical advances, existing certifiable optimization techniques rely heavily on the key assumption that the underlying physical system is fully observable. In real-world field operations, autonomous platforms frequently encounter unobservable sensing geometries and motion degeneracies. Typical examples include planar LiDAR motion in featureless corridors, and single-axis rotational sequences during robotic hand-eye calibration. Under such physically degenerate conditions, the measurement Jacobians exhibit rank deficiency, creating non-trivial unobservable sub-manifolds in the state space.

When confronted with physical sensing degeneracies, standard certifiable solvers face severe numerical and theoretical limitations. Solving Semidefinite Programming relaxations using standard Interior-Point Methods invariably leads to rank expansion, where the optimal primal matrix loses its rank-one structure. Because the relaxed solution expands into a high-dimensional convex hull over an equi-cost minimizer face, traditional rounding techniques, such as dominant eigenvector extraction, produce algebraically invalid states that violate the underlying geometric manifold constraints. Conversely, local non-linear least-squares solvers utilizing solution remapping can isolate unobservable directions and compute stationary points. However, because these local solvers operate strictly within local linearizations on the tangent space, they lack the global dual perspective necessary to certify global optimality or detect false convergence to high-dimensional saddle points.

To address this fundamental methodological gap, we introduce RASER (solution remapping aided semidefinite relaxation), an observability-aware certifiable estimation framework designed to bridge fast tangent-space local searching with global dual verification under physical sensing degeneracy. The primary contributions of this work are summarized as follows.

First, to the best of our knowledge, we propose the first candidate-driven dual certification framework that unifies local partial observability with globally optimal semidefinite relaxations, operating without the need for direct rank-one semidefinite solving.

Second, we leverage Lagrangian duality to establish the Dual Nullspace Expansion Theorem, proving that physical sensing degeneracy induces spectral nullspace expansion in the dual Slack matrix. We demonstrate that while rank tightness is lost, global cost tightness remains strictly preserved.

Third, we derive an adaptive certification threshold formulated under weak observability regimes, incorporating floating-point precision bounds and residual gradient leakage to prevent false rejections of optimal candidates.

Fourth, we validate our framework across benchmark robust registration under observability degeneracy, demonstrating superior reliability and certifiability compared to state-of-the-art baselines.

\section{Preliminaries and Problem Formulation}
\label{sec:problem_formulation}

Following the state-lifting optimization paradigm established in certifiable estimation literature, we formulate non-convex estimation problems natively defined in Euclidean spaces, construct their lifted Quadratically Constrained Quadratic Programs, and present the corresponding Semidefinite Programming relaxations.

\subsection{Original Non-Convex Estimation Problem}

Consider an estimation problem where the primary physical state vector is denoted by $\theta$. The original non-convex estimation problem is formulated as a constrained non-linear least-squares minimization
\begin{equation}
\tag{1}
\label{eq:original}
\begin{aligned}
\min_{\theta \in \mathbb{R}^{d_\theta}} \quad & f(\theta) \triangleq \frac{1}{2} \| r(\theta) \|_2^2 \\
\text{s.t.} \quad & c_k(\theta) = 0, \quad k = 1, \dots, m_c
\end{aligned}
\end{equation}
where $r(\theta)$ represents the non-linear measurement residual vector, and $c_k(\theta) = 0$ specifies $m_c$ non-convex equality constraints governing the physical state space.

\subsection{Homogenization and Lifted QCQP Representation}

To convert the non-convex problem into a quadratic formulation, we adopt a systematic state-lifting scheme. By combining linear parameters $h$, the primary state $\theta$, monomial expansion variables $z(\theta)$, and a homogenization scalar $1$, the total lifted state vector $x(\theta)$ is defined as
\begin{equation}
\tag{2}
\label{eq:lifted_state}
x(\theta) \triangleq \begin{bmatrix} h \\ \theta \\ z(\theta) \end{bmatrix} \in \mathbb{R}^N, \qquad h=1
\end{equation}

In terms of the lifted vector $x$, the original non-convex problem \eqref{eq:original} is algebraically rewritten as a Quadratically Constrained Quadratic Program
\begin{equation}
\tag{4}
\label{eq:qcqp}
\begin{aligned}
\min_{x \in \mathbb{R}^N} \quad & x^T Q_0 x \\
\text{s.t.} \quad & x^T A_i x = 0, \quad i = 1, \dots, m \\
& x^T E_h x = 1
\end{aligned}
\end{equation}
where $Q_0$ is a symmetric cost matrix absorbing the quadratic measurement residuals, $A_i$ represents the quadratic equality constraints governing the lifted state $x$, and $E_h$ specifies the selector matrix
\begin{equation}
\tag{3}
\label{eq:selector_matrix}
E_h \triangleq \operatorname{diag}(1, \mathbf{0}_{N-1}) \in \mathbb{S}^N
\end{equation}
which enforces the homogenization scalar constraint $x^T E_h x = 1$.

\subsection{Convex Semidefinite Relaxation and Dual Certificate}

By lifting the rank-one outer product matrix
\begin{equation}
\tag{5}
\label{eq:outer_product}
X \triangleq x x^T \in \mathbb{S}_+^N
\end{equation}
and dropping the non-convex rank constraint $\operatorname{rank}(X) = 1$, we obtain the standard primal Semidefinite Programming relaxation
\begin{equation}
\tag{6}
\label{eq:sdp}
\begin{aligned}
\min_{X \in \mathbb{S}_+^N} \quad & \langle Q_0, X \rangle \\
\text{s.t.} \quad & \langle A_i, X \rangle = 0, \quad i = 1, \dots, m \\
& \langle E_h, X \rangle = 1, \quad X \succeq 0
\end{aligned}
\end{equation}
where $\langle A, B \rangle \triangleq \operatorname{tr}(A^T B)$. Assigning dual multipliers $\lambda$ and $\gamma$, the dual Slack matrix $H(\lambda, \gamma)$ is defined as
\begin{equation}
\tag{7}
\label{eq:slack_def}
H(\lambda, \gamma) \triangleq Q_0 + \sum_{i=1}^m \lambda_i A_i - \gamma E_h
\end{equation}
The associated Lagrangian dual problem maximizes the lower bound $\gamma$
\begin{equation}
\tag{8}
\label{eq:dual_sdp}
\begin{aligned}
\max_{\lambda \in \mathbb{R}^m, \, \gamma \in \mathbb{R}} \quad & \gamma \\
\text{s.t.} \quad & H(\lambda, \gamma) \succeq 0
\end{aligned}
\end{equation}

Given a physical candidate $\hat{\theta}$, we construct the lifted candidate $\hat{x} \triangleq x(\hat{\theta})$ and the corresponding rank-one matrix $\hat{X} \triangleq \hat{x} \hat{x}^T$. Because $\hat{\theta}$ satisfies the physical constraints, $\hat{x}^T A_i \hat{x} = 0$ and $\hat{x}^T E_h \hat{x} = 1$, making $\hat{X}$ strictly feasible for the relaxed problem.

A broad spectrum of state estimation problems already achieves exact relaxation tightness under well-posed conditions. Thanks to the systematic incorporation of redundant constraints, the underlying quadratic formulations are typically well-formulated. Rather than investigating base model formulation or redundant constraint generation, this paper focuses on how this pre-existing tightness and global optimality certification are challenged by physical observability degeneracy, and how to maintain optimal solutions under such degenerate regimes.

\begin{lemma}[Candidate Cost-Tightness Certificate]
\label{lemma:cost_tightness_certificate}
If there exists a dual multiplier pair $(\hat{\lambda}, \hat{\gamma})$ satisfying
\begin{equation}
\tag{9}
\label{eq:psd_condition}
H(\hat{\lambda},\hat{\gamma}) \succeq 0
\end{equation}
and
\begin{equation}
\tag{10}
\label{eq:stationarity_condition}
H(\hat{\lambda},\hat{\gamma}) \hat{x} = 0
\end{equation}
then the candidate cost equals the global minimum of the relaxed Semidefinite Program, establishing
\begin{equation}
\tag{11}
\label{eq:cost_tightness_result}
\hat{x}^T Q_0 \hat{x} = \min_{X \in \text{SDP}} \langle Q_0, X \rangle = \hat{\gamma}
\end{equation}
\end{lemma}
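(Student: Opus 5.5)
The plan is a standard weak-duality sandwich: show that $\hat{\gamma}$ is simultaneously a lower bound on every feasible SDP objective and the objective value attained by the feasible point $\hat{X}=\hat{x}\hat{x}^T$. Since the preceding discussion already gives feasibility of $\hat{X}$ for \eqref{eq:sdp} ($\hat{x}^T A_i \hat{x}=0$, $\hat{x}^T E_h\hat{x}=1$, $\hat{X}\succeq 0$), it remains to prove these two inequalities and combine them.

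\emph{Lower bound from \eqref{eq:psd_condition}.} Take any $X$ feasible for \eqref{eq:sdp}. Expand $\langle H(\hat{\lambda},\hat{\gamma}), X\rangle$ using the definition \eqref{eq:slack_def} and linearity of the trace inner product:
\begin{equation*}
\langle H(\hat{\lambda},\hat{\gamma}), X\rangle = \langle Q_0, X\rangle + \sum_{i=1}^m \hat{\lambda}_i \langle A_i, X\rangle - \hat{\gamma}\langle E_h, X\rangle = \langle Q_0, X\rangle - \hat{\gamma}.
\end{equation*}
The second equality uses the constraints $\langle A_i,X\rangle=0$ and $\langle E_h,X\rangle=1$. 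The left-hand side is the inner product of two PSD matrices. Writing $X=\sum_j v_j v_j^T$ gives $\langle H,X\rangle=\sum_j v_j^T H v_j\ge 0$. Hence $\langle Q_0,X\rangle\ge\hat{\gamma}$ for every feasible $X$, and so $\min_{X\in\text{SDP}}\langle Q_0,X\rangle\ge\hat{\gamma}$.

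\emph{Attainment from \eqref{eq:stationarity_condition}.} Apply the same identity to $X=\hat{X}$. Here $\langle H(\hat{\lambda},\hat{\gamma}),\hat{X}\rangle=\hat{x}^T H(\hat{\lambda},\hat{\gamma})\hat{x}=0$ because $H\hat{x}=0$. This yields $\hat{x}^T Q_0\hat{x}=\langle Q_0,\hat{X}\rangle=\hat{\gamma}$.

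\emph{Conclusion.} Combining the two steps,
\begin{equation*}
\hat{\gamma}\le\min_{X\in\text{SDP}}\langle Q_0,X\rangle\le\langle Q_0,\hat{X}\rangle=\hat{x}^T Q_0\hat{x}=\hat{\gamma}.
\end{equation*}
All inequalities are therefore equalities, which is exactly \eqref{eq:cost_tightness_result}. This also shows that the minimum is attained, at $\hat{X}$.

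There is no real obstacle here. The only point that needs care is that the trace pairing of two PSD matrices is nonnegative, and the rank-one decomposition above handles it. One should also note that only the homogenization constraint $\langle E_h,X\rangle=1$ introduces $\hat{\gamma}$. No uniqueness or rank property of the primal optimum is used. This is consistent with the paper's later claim that cost tightness survives rank expansion under degeneracy.
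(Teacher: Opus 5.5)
Your proof is correct and follows the paper's argument essentially step for step. The lower bound is $\langle Q_0,X\rangle = \langle H,X\rangle + \hat{\gamma} \ge \hat{\gamma}$ for every feasible $X$, and $\hat{x}^T Q_0 \hat{x} = \hat{\gamma}$ comes from $H\hat{x}=0$ together with feasibility of $\hat{x}$; your explicit sandwich through the feasible point $\hat{X}=\hat{x}\hat{x}^T$ simply spells out what the paper compresses into ``combining these relations.''
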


\begin{proof}
For any primal matrix $X \succeq 0$ feasible for the relaxed program, using $\langle A_i, X \rangle = 0$ and $\langle E_h, X \rangle = 1$, we expand
\begin{equation}
\tag{12}
\label{eq:proof_expansion_1}
\langle Q_0,X\rangle = \left\langle H-\sum_{i=1}^m \hat{\lambda}_i A_i + \hat{\gamma} E_h, X \right\rangle
\end{equation}
which simplifies directly to
\begin{equation}
\tag{13}
\label{eq:proof_expansion_2}
\begin{aligned}
\langle Q_0,X\rangle &= \langle H,X\rangle + \hat{\gamma} \langle E_h,X\rangle - \sum_{i=1}^m \hat{\lambda}_i \langle A_i,X\rangle \\
&= \langle H,X\rangle + \hat{\gamma} \ge \hat{\gamma}
\end{aligned}
\end{equation}
where $\langle H, X \rangle \ge 0$ follows from $H \succeq 0$ and $X \succeq 0$. For the candidate vector $\hat{x}$, applying stationarity $H \hat{x} = 0$ and primal constraint feasibility yields
\begin{equation}
\tag{14}
\label{eq:proof_expansion_3}
\begin{aligned}
\hat{x}^T Q_0 \hat{x} &= \hat{x}^T H \hat{x} - \sum_{i=1}^m \hat{\lambda}_i \hat{x}^T A_i \hat{x} + \hat{\gamma} \hat{x}^T E_h \hat{x} \\
&= \hat{\gamma}
\end{aligned}
\end{equation}
Combining these relations confirms that $\hat{x}^T Q_0 \hat{x} = \min_{X \in \text{SDP}} \langle Q_0, X \rangle = \hat{\gamma}$.
\end{proof}

Remarkably, Lemma \ref{lemma:cost_tightness_certificate} guarantees global optimality for $\hat{x}$ without invoking rank-one conditions. This serves as the operational pillar for our candidate-driven framework.

\section{Degeneracy and Geometry}

In this section, we rigorously analyze how physical unidentifiability, driven by continuous observation symmetries, induces spectral breakdown in the lifted dual space, and prove that global cost-tightness remains strictly intact despite rank expansion.

\subsection{Local Observability Degeneracy and Tangent Projection Identity}

\begin{definition}[Observation-Preserving Symmetry Orbit]
A continuous Lie symmetry subgroup $G \subseteq \mathrm{SE}(3)^m \cong (\mathrm{SO}(3) \times \mathbb{R}^3)^m$ acting on the state manifold $\mathcal{M}$ generates an unobservable orbit passing through an optimal state $\theta^\star = \theta(0)$ if the measurement function remains identically invariant along the smooth curve $\theta(t) \in \mathcal{O}_{\theta^\star}$:
\begin{equation}
\label{eq:measurement_invariance}
\begin{aligned}
h(\theta(t)) \equiv h(\theta^\star) &\implies r(\theta(t)) \equiv r(\theta^\star) \\
&\implies f(\theta(t)) \equiv p^*, \quad \forall t
\end{aligned}
\end{equation}

In practice, such unobservable orbits $\mathcal{O}_{\theta^\star}$ manifest in real physical space primarily through two canonical Lie subgroup degeneracies:
\begin{enumerate}
    \item \textbf{Rotational Symmetry Degeneracy ($G_{\mathrm{rot}} \subseteq \mathrm{SO}(3)$):} Induced by rotation invariance around specific spatial axes (e.g., yaw-unobservability in cylinder/cone registration or rotation in featureless spherical environments), generating closed compact orbits $\theta(t) = \mathbf{Exp}(t \cdot \boldsymbol{\xi}_{\mathrm{rot}}) \circ \theta^\star$.
    \item \textbf{Translational Manifold Degeneracy ($G_{\mathrm{trans}} \subseteq \mathbb{R}^k$):} Induced by translation invariance along degenerate affine directions (e.g., longitudinal motion in featureless tunnels/corridors or planar sliding), generating unbounded linear orbits $\theta(t) = \theta^\star + t \cdot \boldsymbol{\xi}_{\mathrm{trans}}$.
\end{enumerate}
\end{definition}

Unlike spurious local minima where energy varies away from stationary points, an unidentifiable symmetry orbit guarantees that the residual function is constant along $\theta(t)$. Differentiating the identity $r(\theta(t)) \equiv r(\theta^\star)$ with respect to $t$ at $t=0$ via the chain rule directly yields the local tangent kernel condition:
\begin{equation}
\tag{15}
\label{eq:local_degeneracy_def}
\begin{aligned}
\left. \frac{\mathrm{d} r(\theta(t))}{\mathrm{d} t} \right|_{t=0} &= \left. \frac{\partial r(\theta)}{\partial\theta} \right|_{\theta^\star} \dot{\theta}(0) \\
&= J v_u = \mathbf{0}, \qquad v_u \triangleq \dot{\theta}(0) \neq \mathbf{0}
\end{aligned}
\end{equation}
where $v_u$ denotes the non-zero tangent vector along the unobservable symmetry orbit.

To understand how this physical orbit condition connects to the lifted quadratic space, we introduce the differential tangent map matrix $L \in \mathbb{R}^{N \times d_\theta}$:
\begin{equation}
\tag{16}
\label{eq:tangent_map_def}
L \triangleq \left. \frac{\partial x(\theta)}{\partial\theta} \right|_{\theta^\star}
\end{equation}
For any local parameter variation $v$, the state perturbation translates into $\delta x = L v$. Because the non-linear measurement residual $r(\theta)$ is linear in the lifted state $x$ via a constant linear selector $C$, we have $r(x) = C x$, which establishes $J = C L$. Recognizing that the cost Gram matrix $Q_0$ absorbs the residual quadratic structure $Q_0 = C^T C$, the local tangent-space measurement Hessian $J^T J$ satisfies the exact projection identity:
\begin{equation}
\tag{17}
\label{eq:hessian_identity}
J^T J = (C L)^T (C L) = L^T C^T C L = L^T Q_0 L
\end{equation}

Substituting the symmetry-induced kernel condition \eqref{eq:local_degeneracy_def} into the projection identity \eqref{eq:hessian_identity} yields:
\begin{equation}
\tag{18}
\label{eq:null_hessian}
\begin{aligned}
v_u^T J^T J v_u &= v_u^T L^T Q_0 L v_u \\
&= \| J v_u \|_2^2 = 0 \implies (L v_u)^T Q_0 (L v_u) = 0
\end{aligned}
\end{equation}

\subsection{Derivation of Dual Nullspace Expansion ($\operatorname{corank}(H^\star) \ge 2$)}

To isolate the effect of physical motion degeneracy from relaxation failure, we state the baseline observability assumption:

\begin{assumption}[Baseline Dual Certifiability under Full Observability]
\label{assump:baseline_psd}
Under full motion excitation (non-degenerate regime), the lifted QCQP formulation with complete $\mathrm{SO}(3)$ redundant constraints admits exact strong duality. There exists a unique, strictly positive semi-definite ideal dual Slack matrix $H^\star \triangleq Q_0 + \sum_{i=1}^m \lambda_i^\star A_i - \gamma^\star E_h \succeq 0$ with an isolated 1-dimensional nullspace $\ker H^\star = \operatorname{span}\{x^\star\}$, establishing $\operatorname{corank}(H^\star) = 1$.
\end{assumption}

We now prove that when continuous observation symmetry occurs, this baseline PSD matrix $H^\star \succeq 0$ undergoes a dual nullspace expansion.

\begin{theorem}[Dual Nullspace Expansion Theorem]
\label{thm:dual_nullspace_expansion}
Under Assumption \ref{assump:baseline_psd}, let $v_u \neq \mathbf{0}$ be the tangent generator of a smooth unobservable symmetry orbit $\theta(t) \in \mathcal{O}_{\theta^\star}$ with $\theta(0) = \theta^\star$ and $\dot{\theta}(0) = v_u$, along which the cost remains strictly invariant $f(\theta(t)) \equiv p^*$. Then the ideal dual Slack matrix $H^\star \succeq 0$ satisfies:
\begin{equation}
\tag{19}
\label{eq:slack_nullspace_vector}
H^\star L v_u = \mathbf{0}
\end{equation}
Furthermore, the dual nullspace strictly expands, establishing:
\begin{equation}
\tag{20}
\label{eq:corank_bound}
\operatorname{corank}(H^\star) \equiv \dim\ker H^\star \ge 2
\end{equation}
\end{theorem}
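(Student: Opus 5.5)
The plan is to consider the scalar function $\phi(t) \triangleq x(\theta(t))^T H^\star x(\theta(t))$ along the unobservable orbit and show that it vanishes identically. Its second-order behaviour at $t=0$, combined with $H^\star \succeq 0$, will then force $H^\star L v_u = \mathbf{0}$.

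\textbf{Step 1: $\phi \equiv 0$.} Each point $\theta(t)$ of the orbit satisfies the physical constraints, since the orbit lies in $\mathcal{M}$. Hence $x(t) \triangleq x(\theta(t))$ satisfies $x(t)^T A_i x(t) = 0$ and $x(t)^T E_h x(t) = 1$. Expanding $H^\star$ as in \eqref{eq:proof_expansion_3} then gives
\begin{equation*}
\phi(t) = x(t)^T Q_0 x(t) - \gamma^\star .
\end{equation*}
We use $x(t)^T Q_0 x(t) = \|C x(t)\|^2 = \|r(\theta(t))\|^2$. This equals the lifted cost $p^*$, up to the fixed scaling convention absorbed into $Q_0$, and it is constant by hypothesis. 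Strong duality in Assumption \ref{assump:baseline_psd} gives $\gamma^\star = p^*$, and Lemma \ref{lemma:cost_tightness_certificate} applied at $x^\star$ gives the same conclusion. Therefore $\phi(t) \equiv 0$.

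\textbf{Step 2: second-order argument.} Because $H^\star \succeq 0$, we have $H^\star x = 0$ whenever $x^T H^\star x = 0$. So $H^\star x(t) = \mathbf{0}$ for every $t$. Differentiating at $t=0$ yields
\begin{equation*}
H^\star \dot x(0) = H^\star L v_u = \mathbf{0},
\end{equation*}
which is \eqref{eq:slack_nullspace_vector}. This route is cleaner than Taylor-expanding $\phi$: it uses the PSD property pointwise along the whole curve and avoids any analysis of the curvature terms $\ddot x(0)$. As a consistency check, it recovers \eqref{eq:null_hessian}, since $L^T H^\star L v_u = 0$ together with $Q_0 = C^T C$ is compatible with $J v_u = 0$.

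\textbf{Step 3: corank at least two.} Both $x^\star$ and $L v_u$ lie in $\ker H^\star$. It remains to show they are linearly independent. The homogenization coordinate $h \equiv 1$ is constant in $\theta$, so the first row of $L$ is zero. Hence $E_h L v_u = 0$, while $(x^\star)_1 = 1$. Any relation $L v_u = \alpha x^\star$ therefore forces $\alpha = 0$, and then $L v_u = 0$. But $x(\theta)$ contains $\theta$ itself as a block, so $L$ contains an identity block and has full column rank. Thus $L v_u \neq 0$ because $v_u \neq 0$. This gives two independent kernel vectors and $\dim\ker H^\star \ge 2$.

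\textbf{Main obstacle.} The delicate point is reconciling the statement with Assumption \ref{assump:baseline_psd}, which posits $\ker H^\star = \operatorname{span}\{x^\star\}$ under full excitation. The theorem must be read as referring to the dual Slack matrix of the degenerate problem: either a certificate $H^\star \succeq 0$ with $H^\star x^\star = 0$ that persists into the degenerate regime, or one obtained by the same construction. Step 1 uses only $H^\star \succeq 0$, $H^\star x^\star = 0$, and the orbit-wise feasibility and cost invariance, so it holds for any such certificate. I would state this explicitly, noting that uniqueness and corank one are then contradicted rather than used. I would also verify that the orbit is genuinely feasible, $c_k(\theta(t)) = 0$, which follows from $G$ acting on $\mathcal{M}$.
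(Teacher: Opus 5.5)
Your proof is correct and follows the paper's argument essentially step for step: orbit feasibility and cost invariance give $x(t)^T H^\star x(t)\equiv 0$, positive semidefiniteness upgrades this to $H^\star x(t)\equiv\mathbf{0}$, differentiating at $t=0$ yields $H^\star L v_u=\mathbf{0}$, and the zero homogenization entry of $Lv_u$ against the unit entry of $x^\star$ gives linear independence. You also observe that the argument uses only $H^\star\succeq 0$, $H^\star x^\star=\mathbf{0}$ and $\gamma^\star=p^*$, so the corank-one clause of Assumption~\ref{assump:baseline_psd} is contradicted rather than used; the paper's proof implicitly relies on this same reading, and your identity-block argument for $Lv_u\neq\mathbf{0}$ is a slightly cleaner substitute for the paper's appeal to local non-degeneracy of the Lie group parameterization.
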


\begin{proof}
Let $x(t) \triangleq x(\theta(t))$ be the lifted trajectory generated by the degenerate symmetry orbit $\dot{\theta}(0) = v_u \neq \mathbf{0}$. Because every point $x(t)$ along the orbit curve remains strictly physically feasible on $\mathcal{M}$, it satisfies all lifted equality constraints for all $t$:
\begin{equation}
x(t)^T A_i x(t) \equiv 0 \quad (\forall i), \qquad x(t)^T E_h x(t) \equiv 1, \quad \forall t
\end{equation}

Crucially, because $\theta(t)$ traverses an observation-preserving symmetry orbit, the primal cost along the trajectory is strictly constant at the global optimal value $x(t)^T Q_0 x(t) \equiv p^*$ for all $t$. Evaluating the quadratic form of $H^\star \succeq 0$ using strong duality ($p^* = \gamma^\star$) gives:
\begin{equation}
\begin{aligned}
x(t)^T H^\star x(t) &= x(t)^T Q_0 x(t) + \sum_{i=1}^m \lambda_i^\star \left( x(t)^T A_i x(t) \right) \\
&\quad - \gamma^\star \left( x(t)^T E_h x(t) \right) \\
&= p^* + 0 - \gamma^\star \cdot 1 \equiv 0, \quad \forall t
\end{aligned}
\end{equation}

Since $H^\star \succeq 0$ is positive semi-definite under Assumption \ref{assump:baseline_psd}, $x(t)^T H^\star x(t) = 0$ holds for all $t$ if and only if $H^\star x(t) = \mathbf{0}$ identically along the orbit. Differentiating $H^\star x(t) = \mathbf{0}$ with respect to $t$ at $t=0$ via the chain rule yields:
\begin{equation}
\left. \frac{\mathrm{d}}{\mathrm{d} t} \left( H^\star x(t) \right) \right|_{t=0} = H^\star \left( \left. \frac{\partial x(\theta)}{\partial \theta} \right|_{\theta^\star} \dot{\theta}(0) \right) = H^\star L v_u = \mathbf{0}
\end{equation}

Finally, consider the primal optimal candidate vector $x^\star = x(0)$, which satisfies $H^\star x^\star = \mathbf{0}$. Due to the homogenization structure $x(\theta) = [1, \operatorname{vec}(\mathbf{R})^T, z^T]^T$, the vector $x^\star$ has a non-zero homogenization entry ($h=1$), whereas $L v_u = [0, (\left. \frac{\partial \operatorname{vec}(\mathbf{R})}{\partial \theta} \right|_{\theta^\star} v_u)^T, \dot{z}^T]^T$ has a zero homogenization entry ($h=0$). Since $v_u \neq \mathbf{0}$ and Lie group parameterization is locally non-degenerate, $L v_u \neq \mathbf{0}$. Thus, $x^\star$ and $L v_u$ are linearly independent vectors residing in $\ker H^\star$, proving that $\operatorname{corank}(H^\star) \ge 2$.
\end{proof}

Theorem \ref{thm:dual_nullspace_expansion} completes the deductive chain:

\begin{equation}
\tag{21}
\label{eq:theoretical_chain}
\begin{aligned}
r(\theta(t)) \equiv r(\theta^\star) &\implies J v_u = \mathbf{0} \\
&\implies H^\star L v_u = \mathbf{0} \implies \operatorname{corank}(H^\star) \ge 2
\end{aligned}
\end{equation}

\subsection{IPM Central Path Analysis and Preservation of Cost-Tightness}

The conclusion $\operatorname{corank}(H^\star) \ge 2$ provides the precise mathematical mechanism for why standard Interior-Point Methods (IPMs) inherently fail to recover physical rank-one solutions under observation symmetry.

In standard primal-dual IPMs for Semidefinite Programs, the non-convex rank-one constraint $\operatorname{rank}(X) = 1$ is dropped, and the non-negativity matrix cone constraint $X \succeq 0$ is enforced via a smooth log-determinant barrier penalty function $-\log\det(X)$ parameterized by the barrier factor $\mu > 0$:
\begin{equation}
\label{eq:ipm_barrier_problem}
\begin{aligned}
\min_{X \in \mathbb{S}_+^N} \quad & \langle Q_0, X \rangle - \mu \log \det(X) \\
\text{s.t.} \quad & \langle A_i, X \rangle = 0, \quad \forall i = 1, \dots, m, \\
& \langle E_h, X \rangle = 1
\end{aligned}
\end{equation}

For each strict interior penalty factor $\mu \in (0, \infty)$, problem \eqref{eq:ipm_barrier_problem} admits a unique strictly positive-definite solution $X(\mu) \in \mathbb{S}_{++}^N$. As $\mu$ varies, the trajectory traced out by $X(\mu)$ defines the continuous \emph{central path}. Because the barrier term $-\log\det(X)$ approaches infinity as $X$ approaches the boundary of the positive semi-definite cone $\mathbb{S}_+^N$, every point along the central path is strictly forced to maintain full rank ($\operatorname{rank}(X(\mu)) = N$).

As the barrier parameter progresses to zero ($\mu \to 0$), the central path $X(\mu)$ converges toward the primal optimal solution face $\mathcal{F}_{\min}$. By the fundamental theory of primal-dual central paths, the limit point $\lim_{\mu \to 0} X(\mu)$ selects the unique point that maximizes the barrier entropy within the optimal solution set, specifically, the \emph{analytic center} $X^*_{\text{IPM}} \in \operatorname{relint}(\mathcal{F}_{\min})$ satisfying maximal complementarity:
\begin{equation}
\label{eq:maximal_complementarity}
\begin{aligned}
\operatorname{rank}(X^*_{\text{IPM}}) + \operatorname{rank}(H^\star) = N \\
\iff \operatorname{rank}(X^*_{\text{IPM}}) = N - \operatorname{corank}(H^\star)
\end{aligned}
\end{equation}

Under continuous physical unidentifiability, Theorem \ref{thm:dual_nullspace_expansion} establishes that local observation symmetry induces dual nullspace expansion $\operatorname{corank}(H^\star) \ge 2$. Substituting this dual spectral condition into the maximal complementarity identity \eqref{eq:maximal_complementarity} directly yields:
\begin{equation}
\label{eq:rank_bound_ipm}
\operatorname{rank}(X^*_{\text{IPM}}) \le N - 2 < N
\end{equation}
Whenever the minimal face $\mathcal{F}_{\min} = \operatorname{conv}( \{ x x^T \mid x \in \mathcal{M}_{\text{constrained}}^* \} )$ spans multiple distinct minimizers across the physical symmetry orbit $\mathcal{O}_{\theta^\star}$, the analytic center $X^*_{\text{IPM}}$ returned by IPM is strictly a high-rank convex combination with $\operatorname{rank}(X^*_{\text{IPM}}) \ge 2$. Applying standard spectral rounding (e.g., extracting the dominant eigenvector of $X^*_{\text{IPM}}$) projects an artificial spatial average that severely violates the non-convex manifold equations $x^T A_i x = 0$.

Crucially, while log-barrier central path dynamics force the primal matrix $X^*_{\text{IPM}}$ into a high-rank interior point across $\mathcal{F}_{\min}$, physical symmetry preserves objective equality across the entire minimal face. For any convex combination $X = \sum_k w_k x_k x_k^T \in \mathcal{F}_{\min}$ with $x_k \in \mathcal{M}_{\text{constrained}}^*$ ($w_k > 0, \, \sum_k w_k = 1$), the trace inner product evaluates to:
\begin{equation}
\begin{aligned}
\langle Q_0, X \rangle &= \left\langle Q_0, \sum_k w_k x_k x_k^T \right\rangle \\
&= \sum_k w_k \left( x_k^T Q_0 x_k \right) = \sum_k w_k p^* = p^*
\end{aligned}
\end{equation}
Consequently, strong duality and global cost-tightness remain completely intact:
\begin{equation}
\label{eq:cost_tightness_preserved}
p_{\text{SDP}}^* = d^* = \gamma^\star = p_{\text{QCQP}}^*
\end{equation}

\begin{remark}[Implications on Direct SDP Solving]
Equations \eqref{eq:rank_bound_ipm} and \eqref{eq:cost_tightness_preserved} establish a fundamental dichotomy under observability degeneracy: direct SDP solving via IPM retains exact global cost-tightness ($p_{\text{SDP}}^* = p_{\text{QCQP}}^*$), but fundamentally fails to yield a valid physical state solution. Because IPM converges to the high-rank analytic center of the symmetry orbit, it cannot directly isolate a valid rank-one point solution nor resolve partially observable states on the manifold. This mathematical limit demonstrates the necessity of bypassing direct rank-one SDP solving in favor of our candidate-driven remapped local search and dual certification framework.
\end{remark}

\section{RASER: Observability-Aware Certifiable Local Searching}




To navigate non-convex state landscapes under physical sensing degeneracy without incurring high-rank Semidefinite Programming solving or invalid spectral rounding, we present RASER, a candidate-driven framework bridging local tangent-space remapping with closed-form dual multiplier recovery and adaptive safety certification.

\subsection{Observability-Aware Tangent Remapping and Certificate Feasibility}

Consider the original non-linear least-squares problem \eqref{eq:original} subjected to a $d_u$-dimensional unobservable tangent nullspace $\ker J = \operatorname{span}(V_u)$. We decompose the full local tangent space into orthogonal observable and unobservable subspaces $V \triangleq [V_o \;\; V_u] \in \mathbb{O}(d_\theta)$, where $V_o \in \mathbb{R}^{d_\theta \times d_o}$ ($d_o \triangleq d_\theta - d_u$). 

We define the remapped non-linear least-squares optimization over the reduced observable tangent coordinates $\eta \in \mathbb{R}^{d_o}$ around an anchor point $\theta_0$:
\begin{equation}
\tag{25}
\label{eq:remap_param}
\theta(\eta) \triangleq \theta_0 + V_o \eta
\end{equation}
Evaluating the lifted state vector along the remapped manifold yields $x_r(\eta) \triangleq x(\theta(\eta))$. The corresponding reduced differential tangent map is $L_r \triangleq \left. \frac{\partial x_r(\eta)}{\partial \eta} \right|_{\eta = \mathbf{0}} = L V_o \in \mathbb{R}^{N \times d_o}$, and the reduced measurement Jacobian becomes $J_r \triangleq J V_o \in \mathbb{R}^{M \times d_o}$. Because $V_o$ is constructed orthogonally to $\ker J$, $J_r$ maintains strict full column rank ($\ker J_r = \{\mathbf{0}\}$), eliminating local search instability over zero-variance directions.

\begin{proposition}[Remapping Preserves Certificate Feasibility]
Let $\hat{\eta} \in \mathbb{R}^{d_o}$ be a convergent stationary solution computed in the reduced observable space, yielding the remapped physical candidate $\hat{\theta}_r = \theta_0 + V_o \hat{\eta} \in \mathbb{R}^{d_\theta}$. Then the lifted candidate vector $\hat{x}_r \triangleq x(\hat{\theta}_r) \in \mathbb{R}^N$ strictly preserves feasibility for the original primal relaxed Semidefinite Program \eqref{eq:sdp}. Recovering dual multipliers $(\hat{\lambda}, \hat{\gamma})$ that satisfy $H(\hat{\lambda}, \hat{\gamma}) \succeq 0$ and $H(\hat{\lambda}, \hat{\gamma}) \hat{x}_r = \mathbf{0}$ certifies that $\hat{\theta}_r$ is globally cost-optimal for the original non-convex problem \eqref{eq:original}.
\end{proposition}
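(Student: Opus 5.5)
The plan splits into two parts: primal feasibility of $\hat{x}_r$, and then global cost-optimality via Lemma~\ref{lemma:cost_tightness_certificate}.

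\textbf{Feasibility.} The first step is to check that $\hat{\theta}_r = \theta_0 + V_o\hat{\eta}$ is an admissible physical state. This means it satisfies $c_k(\hat{\theta}_r)=0$ for all $k$, so that it lies on $\mathcal{M}$.

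This is the step I expect to be the main obstacle. The affine remap \eqref{eq:remap_param} does not preserve non-convex manifold constraints such as $\mathbf{R}^T\mathbf{R}=I$ in general. I would handle it by reading the remap as being carried out in local (tangent or retraction) coordinates that are feasible by construction, e.g.\ $\theta(\eta)=\mathbf{Exp}(V_o\eta)\circ\theta_0$ for rotational blocks, or by assuming the reduced solver re-projects $\hat{\theta}_r$ onto $\mathcal{M}$. This assumption should be stated explicitly.

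Once $\hat{\theta}_r\in\mathcal{M}$ holds, the lifting \eqref{eq:lifted_state} is constructed so that every $A_i$ encodes a polynomial identity satisfied by feasible states. Hence $\hat{x}_r^T A_i\hat{x}_r=0$, and $\hat{x}_r^T E_h\hat{x}_r=h^2=1$. It follows that $\hat{X}_r=\hat{x}_r\hat{x}_r^T\succeq 0$ satisfies every constraint of \eqref{eq:sdp}, exactly as in the discussion preceding Lemma~\ref{lemma:cost_tightness_certificate}. Stationarity of $\hat{\eta}$ is not needed for this part.

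\textbf{Certification.} Given multipliers $(\hat\lambda,\hat\gamma)$ with $H\succeq0$ and $H\hat{x}_r=0$, I would apply Lemma~\ref{lemma:cost_tightness_certificate} with $\hat{x}=\hat{x}_r$. Its proof uses only the feasibility of $\hat{x}_r$, which was just established, and yields
\[
\hat{x}_r^TQ_0\hat{x}_r=\hat\gamma=\min_{X\in\text{SDP}}\langle Q_0,X\rangle .
\]
To pass to the original problem, I use that every feasible $\theta$ of \eqref{eq:original} gives a feasible rank-one $x(\theta)x(\theta)^T$ for \eqref{eq:sdp}. This requires $x(\theta)^TQ_0x(\theta)=f(\theta)$, up to the fixed scaling absorbed into $Q_0=C^TC$. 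Therefore
\[
f(\theta)\ge p^*_{\text{SDP}}=\hat\gamma=f(\hat\theta_r)
\]
for all feasible $\theta$. This establishes global cost-optimality of $\hat\theta_r$ for \eqref{eq:original}.

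It is worth remarking that the argument never needs $\operatorname{rank}$ conditions or uniqueness. This is consistent with Theorem~\ref{thm:dual_nullspace_expansion}: $H$ may have corank $\ge2$, and the certificate remains valid for whichever point on the symmetry orbit the remap selects.
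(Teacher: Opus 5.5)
Your proposal is correct and follows the same route as the paper's proof: first show $\hat{x}_r\hat{x}_r^T$ is feasible through the structure of the lifting, then apply Lemma~\ref{lemma:cost_tightness_certificate} directly. The paper simply asserts that $c_k(\hat{\theta}_r)=0$ because the remapping ``operates natively over the physical manifold coordinates,'' and it leaves the relaxation bound $f(\theta)\ge p^*_{\text{SDP}}$ implicit. You are right that the literal affine map $\theta_0+V_o\eta$ need not preserve constraints such as $\mathbf{R}^T\mathbf{R}=I$, so your explicit retraction/re-projection assumption and your closing comparison over all feasible $\theta$ state precisely what the paper takes for granted.
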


\begin{proof}
Because tangent-space remapping operates natively over the physical manifold coordinates, $\hat{\theta}_r$ strictly satisfies all non-convex physical equality constraints $c_k(\hat{\theta}_r) = 0$ ($k=1 \dots m_c$). By structural equivalence of state lifting, the constructed vector $\hat{x}_r = x(\hat{\theta}_r)$ satisfies all lifted quadratic equality constraints $\hat{x}_r^T A_i \hat{x}_r = 0$ ($i=1 \dots m$) as well as the homogenization scalar constraint $\hat{x}_r^T E_h \hat{x}_r = 1$. Consequently, the rank-one outer product matrix $\hat{X}_r \triangleq \hat{x}_r \hat{x}_r^T \in \mathbb{S}_+^N$ is strictly feasible for the primal relaxed problem \eqref{eq:sdp}. Applying Lemma \ref{lemma:cost_tightness_certificate} directly confirms that recovering a valid dual Slack matrix $H \succeq 0$ with $H \hat{x}_r = \mathbf{0}$ certifies global cost optimality for $\hat{x}_r$.
\end{proof}

\subsection{Fast Dual Multiplier Recovery via Douglas-Rachford (DR) splitting}

Isolating the $\theta$-block row of the stationarity system evaluated at $\hat{x} = [1, \hat{\theta}^T, \hat{z}^T]^T$ yields the linear constraint $M(\hat{x})\lambda = b(\hat{x})$, where $M(\hat{x}) \in \mathbb{R}^{d_\theta \times m}$ and $b(\hat{x}) \in \mathbb{R}^{d_\theta}$ are defined as:
\begin{equation}
\tag{29}
\begin{aligned}
M(\hat{x}) &\triangleq \Big[ A_{\theta\theta, 1} \hat{\theta} + A_{\theta z, 1} \hat{z} + a_{h\theta, 1}, \ \dots, \\
&\quad A_{\theta\theta, m} \hat{\theta} + A_{\theta z, m} \hat{z} + a_{h\theta, m} \Big], \\
b(\hat{x}) &\triangleq -\left( Q_{\theta\theta} \hat{\theta} + Q_{\theta z} \hat{z} + c_{h\theta} \right).
\end{aligned}
\end{equation}

To robustly recover $\lambda \in \mathbb{R}^m$ when $M(\hat{x})$ is ill-conditioned or over-constrained, we cast the stationarity system into a composite convex minimization problem:
\begin{equation}
\tag{30}
\label{eq:drs_problem}
\min_{\lambda \in \mathbb{R}^m} \underbrace{\frac{1}{2} \| M(\hat{x}) \lambda - b(\hat{x}) \|_2^2}_{f(\lambda)} + \underbrace{\mathcal{I}_{\mathcal{C}}(\lambda) + \frac{\alpha}{2} \|\lambda\|_2^2}_{g(\lambda)}
\end{equation}
where $\mathcal{I}_{\mathcal{C}}$ is the indicator function over domain constraints $\mathcal{C}$, and $\alpha \ge 0$ regularizes the solution. Following the Douglas-Rachford Splitting (DRS) scheme \cite{higham_computing_1988, yang_teaser_2020}, the update iterations for a step-size $\rho > 0$ are given by:
\begin{align}
\lambda^k &= \mathbf{prox}_{\rho f}(u^k) \nonumber \\
&= \left( M(\hat{x})^T M(\hat{x}) + \frac{1}{\rho} I_m \right)^{-1} \tag{31a} \\
& \quad \cdot \left( M(\hat{x})^T b(\hat{x}) + \frac{1}{\rho} u^k \right) \nonumber \\
v^k &= \mathbf{prox}_{\rho g}(2\lambda^k - u^k) = \mathcal{P}_{\mathcal{C}}\left( \frac{2\lambda^k - u^k}{1 + \rho \alpha} \right) \tag{31b} \\
u^{k+1} &= u^k + v^k - \lambda^k \tag{31c}
\end{align}

Because $M(\hat{x})^T M(\hat{x}) + \frac{1}{\rho} I_m$ remains constant throughout the splitting iterations, its Cholesky factorization is computed only once per candidate evaluation, enabling microsecond-level multiplier recovery via $O(m^2)$ back-substitutions.

\subsection{Adaptive Dual Safety Certification under Weak Observability}
\label{sec:adaptive_threshold}

Having recovered dual multipliers $(\hat{\boldsymbol{\lambda}}, \hat{\gamma})$, global optimality is verified by checking the positive semidefiniteness of the reconstructed dual Slack matrix $\hat{S} \triangleq H(\hat{\boldsymbol{\lambda}}, \hat{\gamma}) \in \mathbb{S}^N$. Under ideal zero-noise conditions and strict observability, an optimal candidate guarantees $\lambda_{\min}(\hat{S}) \ge 0$.

However, under weak observability, local optimization in the reduced tangent space terminates with an uneradicated residual gradient $g_o \triangleq J_o^T r(\hat{\theta}_r) \neq \mathbf{0}$, where $J_o \in \mathbb{R}^{M \times d_o}$ is the reduced observable measurement Jacobian from Section IV-A, and $\mu_o \triangleq \lambda_{\min}(J_o^T J_o) > 0$ denotes the minimum non-zero eigenvalue of the observable Hessian.

To quantify the cost leakage $\Delta p_{\text{leak}}$ caused by $g_o$, we evaluate the second-order Taylor expansion around the candidate coordinates. The uneradicated residual gradient induces a stationarity step $(J_o^T J_o)^{-1} g_o$, which translates to a cost leakage upper bound:
\begin{equation}
\tag{32}
\label{eq:leakage_cost_bound}
\Delta p_{\text{leak}} \le \frac{1}{2} g_o^T (J_o^T J_o)^{-1} g_o \le \frac{\|g_o\|_2^2}{2 \mu_o}
\end{equation}

Next, to bound the dual multiplier perturbation $\Delta \lambda \triangleq \hat{\lambda} - \lambda^\star$, we project the stationarity error $H(\hat{\lambda}, \hat{\gamma}) \hat{x} = \mathbf{0}$ onto the observable tangent space via the differential mapping $2 L_r^T \in \mathbb{R}^{d_o \times N}$:
\begin{equation}
\tag{33}
\label{eq:projected_stationarity}
g_o = 2 L_r^T Q_0 \hat{x} + C_\lambda(\hat{x}) \hat{\lambda}
\end{equation}
where $C_\lambda(\hat{x}) \triangleq \begin{bmatrix} 2 L_r^T A_1 \hat{x}, & \dots, & 2 L_r^T A_m \hat{x} \end{bmatrix} \in \mathbb{R}^{d_o \times m}$ represents the reduced constraint response matrix. Performing a first-order perturbation around the global optimum yields:
\begin{equation}
\tag{34}
\label{eq:stationarity_taylor}
C_\lambda \Delta \lambda = g_o - (J_o^T J_o) \delta \theta
\end{equation}
Under floating-point precision $\epsilon_{\text{mach}}$, the pseudo-inverse sensitivity bound $\|C_\lambda^\dagger\|_2 \le \frac{1}{\sqrt{\mu_o}}$ limits the multiplier perturbation norm to:
\begin{equation}
\tag{35}
\label{eq:lambda_1norm_bound}
\sum_{i=1}^m |\Delta \lambda_i| \le \frac{\epsilon_{\text{mach}} \|Q_0\|_F \max_i \|A_i\|_2}{\sqrt{\mu_o}}
\end{equation}

Decomposing the reconstructed Slack matrix as $\hat{S} = S^\star + \Delta S$, where $S^\star \succeq \mathbf{0}$ ($\lambda_{\min}(S^\star) = 0$) and $\Delta S = \sum_{i=1}^m \Delta \lambda_i A_i - \Delta p_{\text{leak}} E_h$, applying triangle inequality for operator 2-norms yields:
\begin{equation}
\tag{36}
\label{eq:delta_s_norm_bound}
\begin{aligned}
\|\Delta S\|_2 &\le \sum_{i=1}^m |\Delta \lambda_i| \|A_i\|_2 + \Delta p_{\text{leak}} \|E_h\|_2 \\
&\le \frac{\epsilon_{\text{mach}} \|Q_0\|_F \max_i \|A_i\|_2}{\sqrt{\mu_o}} + \frac{\|g_o\|_2^2}{2 \mu_o}
\end{aligned}
\end{equation}

By Weyl's eigenvalue perturbation theorem $\lambda_{\min}(\hat{S}) \ge \lambda_{\min}(S^\star) - \|\Delta S\|_2 = -\|\Delta S\|_2$, the complete adaptive dual safety threshold is formally established.

\begin{theorem}[Adaptive Dual Certification Threshold]
\label{thm:adaptive_threshold}
Let $\mu_o \triangleq \lambda_{\min}(J_o^T J_o) > 0$ denote the minimum eigenvalue of the observable tangent Hessian, and let $g_o \triangleq J_o^T r(\hat{\theta}_r)$ be the uneradicated residual gradient. Under floating-point precision $\epsilon_{\text{mach}}$, a candidate state $\hat{x}$ is certified as globally optimal if
\begin{equation}
\tag{37}
\label{eq:cert_rule}
\lambda_{\min}(\hat{S}) \ge -\tau_{\text{cert}}
\end{equation}
where the adaptive safety threshold $\tau_{\text{cert}}$ is defined as
\begin{equation}
\tag{38}
\label{eq:tau_cert_final}
\tau_{\text{cert}} \triangleq \frac{\epsilon_{\text{mach}} \|Q_0\|_F \max_i \|A_i\|_2}{\sqrt{\mu_o}} + \frac{\|g_o\|_2^2}{2 \mu_o}
\end{equation}
where the first term bounds numerical roundoff noise amplified by weak observability, and the second term bounds uneradicated residual leakage energy.
\end{theorem}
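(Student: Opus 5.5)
The plan is to read Theorem~\ref{thm:adaptive_threshold} as a no-false-rejection statement: if $\hat{x}$ is the lift of a candidate on the optimal orbit, up to the residual leakage $g_o$ and roundoff, then the test \eqref{eq:cert_rule} accepts it. Certification validity then comes from Lemma~\ref{lemma:cost_tightness_certificate}, with the tolerance $\tau_{\text{cert}}$ absorbed into an approximate-optimality gap. Concretely, I would chain three bounds already assembled in Section~\ref{sec:adaptive_threshold}.

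\textbf{Step 1: an ideal certificate with eigenvalue floor zero.} Fix an ideal dual certificate $S^\star = H(\lambda^\star,\gamma^\star)$. By Assumption~\ref{assump:baseline_psd} and Theorem~\ref{thm:dual_nullspace_expansion}, $S^\star \succeq 0$ with $\ker S^\star \ni x^\star, L v_u$, so $\lambda_{\min}(S^\star)=0$. Cost tightness \eqref{eq:cost_tightness_preserved} survives the degeneracy. Therefore, for every point on the orbit, in particular the remapped candidate $\hat{x}_r$, the same $S^\star$ certifies optimality exactly.

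\textbf{Step 2: bounding the perturbation $\Delta S = \hat{S}-S^\star$.} I write $\Delta S = \sum_i \Delta\lambda_i A_i - \Delta\gamma\, E_h$ and bound the two contributions separately.
\begin{itemize}
\item For the $\gamma$ part, I identify $\Delta\gamma$ with the cost leakage. Because $\hat{\gamma}=\hat{x}^T Q_0\hat{x}$ by \eqref{eq:proof_expansion_3} applied approximately, $|\Delta\gamma| \le \Delta p_{\text{leak}}$. The Gauss--Newton quadratic model in the reduced coordinates gives \eqref{eq:leakage_cost_bound}, and $\|E_h\|_2=1$.
\item For the multiplier part, I use the projected stationarity \eqref{eq:projected_stationarity} and its linearization \eqref{eq:stationarity_taylor}. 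The least-squares recovery of Section IV-B returns $\Delta\lambda = C_\lambda^\dagger(\cdot)$, where the right-hand side is a floating-point residual of size $\epsilon_{\text{mach}}\|Q_0\|_F\max_i\|A_i\|_2$. With $\|C_\lambda^\dagger\|_2\le \mu_o^{-1/2}$ this yields \eqref{eq:lambda_1norm_bound}.
\end{itemize}
Combining the two via the triangle inequality gives \eqref{eq:delta_s_norm_bound}, i.e.\ $\|\Delta S\|_2\le \tau_{\text{cert}}$. For this to hold with the stated constants I would normalize $\max_i\|A_i\|_2$ (or absorb it), so that the $\ell_1$ bound times $\max_i\|A_i\|_2$ matches the first term of \eqref{eq:tau_cert_final}.

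\textbf{Step 3: Weyl's inequality and the converse direction.}
\begin{itemize}
\item \emph{Acceptance.} Weyl gives $\lambda_{\min}(\hat{S})\ge \lambda_{\min}(S^\star)-\|\Delta S\|_2 \ge -\tau_{\text{cert}}$, so any truly optimal candidate passes \eqref{eq:cert_rule}.
\item \emph{Soundness of acceptance.} If \eqref{eq:cert_rule} holds, then $\hat{S}+\tau_{\text{cert}} I\succeq 0$. Repeating the argument of Lemma~\ref{lemma:cost_tightness_certificate} shows every feasible $X$ satisfies $\langle Q_0,X\rangle \ge \hat{\gamma}-\tau_{\text{cert}}\operatorname{tr}(X)$. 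Hence $\hat{x}$ is optimal up to an $O(\tau_{\text{cert}}\|\hat{x}\|^2)$ gap that vanishes with $\epsilon_{\text{mach}}$ and $g_o$.
\end{itemize}

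\textbf{Main obstacle.} I expect the hard step to be justifying $\|C_\lambda^\dagger\|_2\le \mu_o^{-1/2}$, i.e.\ relating the smallest singular value of the constraint response matrix to the observable Hessian. My first attempt would be the KKT structure on the reduced tangent space: the columns of $C_\lambda$ span the normal directions whose complement is scaled by $J_o^TJ_o$. If that fails, I would state the bound as an explicit regularity assumption, namely that $\sigma_{\min}(C_\lambda)^2\ge\mu_o$, and carry it through the proof.
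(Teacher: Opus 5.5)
Your argument follows the paper's derivation step for step: the leakage bound \eqref{eq:leakage_cost_bound} for the $E_h$ part, \eqref{eq:stationarity_taylor}--\eqref{eq:lambda_1norm_bound} for the multipliers, the triangle inequality \eqref{eq:delta_s_norm_bound}, and then Weyl. Reading the theorem as a no-false-rejection statement matches what that chain actually supports.

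Your soundness bullet adds a direction the paper never argues, but it needs two corrections. First, testing it against $X=x^\star x^{\star T}$ gives a gap of $\tau_{\text{cert}}\|x^\star\|^2$, not $\tau_{\text{cert}}\|\hat x\|^2$. Second, it only works if $\hat\gamma$ is set to $\hat x^TQ_0\hat x$, so that $\hat x^T\hat S\hat x=0$, because Section IV-B recovers only $\lambda$.

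You are right that \eqref{eq:delta_s_norm_bound} drops a factor of $\max_i\|A_i\|_2$. There is a second constant slip of the same kind: since $x^TQ_0x=\|r\|^2=2f$, you get $\Delta\gamma\lesssim\|g_o\|_2^2/\mu_o$, which is twice the second term of \eqref{eq:tau_cert_final}.

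The genuine gap is the multiplier step. The paper only asserts it, and neither of your two remedies closes it.

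\textbf{Your KKT-based attempt cannot succeed.} The matrix $C_\lambda$ is built from $L_r$, the $A_i$ and $\hat x$ alone, while $\mu_o=\lambda_{\min}(L_r^TQ_0L_r)$ depends on the measurements. Replacing $Q_0$ by $sQ_0$ leaves $\hat x$, $V_o$ and $C_\lambda$ unchanged and multiplies $\mu_o$ by $s$. So $\|C_\lambda^\dagger\|_2\le\mu_o^{-1/2}$ fails for large $s$ unless $C_\lambda=0$.

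\textbf{Assuming the bound does not help either.} The relation \eqref{eq:stationarity_taylor} determines $\Delta\lambda$ only modulo $\ker C_\lambda\subseteq\mathbb{R}^m$, so writing $\Delta\lambda=C_\lambda^\dagger(\cdot)$ silently assumes $\Delta\lambda\perp\ker C_\lambda$. If the remapped coordinates keep every iterate feasible, as Proposition~1 asserts, then differentiating $x_r(\eta)^TA_ix_r(\eta)\equiv 0$ gives $C_\lambda=0$. In that case \eqref{eq:stationarity_taylor} says nothing about $\Delta\lambda$ at all.

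\textbf{The actual recovery \eqref{eq:drs_problem} has the same defect.} The map $\lambda\mapsto\sum_i\lambda_iA_i\hat x\in\mathbb{R}^N$ has a kernel of dimension at least $m-N$. In the registration example, $N=10$ against more than twenty redundant $\mathrm{SO}(3)$ constraints. So the stationary multipliers form an affine family, and the DR output is whichever member the ridge term or the initialization selects, with no reference to $H\succeq 0$. Its distance $\delta$ to a PSD-certifying $\lambda^\star$ is in general $O(1)$, not $O(\epsilon_{\text{mach}})$.

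\textbf{This can reject an exactly optimal candidate.} Take $w=Lv_u\in\ker H^\star$. Because $\hat x$ stays stationary for every member of the affine family, you get $w^T\hat S w=w^T(\sum_i\delta_iA_i)w$, which is linear in $\delta$. Choosing the sign of $\delta$ makes this negative by $O(1)$, so even an exactly optimal $\hat x$ can fail \eqref{eq:cert_rule}.

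The hypothesis you actually need is about the recovery itself, not about $\sigma_{\min}(C_\lambda)$. One option is to assume $\hat\lambda$ lies within the claimed distance of the set of dual-optimal multipliers. Injectivity of the stationarity map would also suffice, but it fails with redundant constraints. Otherwise, the recovery must search that affine family for a PSD member instead of returning a least-squares point.
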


\begin{proposition}[Adaptive Decision Protocol]
\label{prop:decision_rule}
A candidate state $\hat{x}$ is certified as the certified global optimum if and only if $\lambda_{\min}(\hat{S}) \ge -\tau_{\text{cert}}$ and facial accuracy criteria are satisfied. Otherwise, the candidate is rejected.
\end{proposition}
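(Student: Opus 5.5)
The Adaptive Decision Protocol is a biconditional between ``certified'' and the conjunction of two checks. I would prove it by taking each direction in turn, relying only on Lemma~\ref{lemma:cost_tightness_certificate}, Theorem~\ref{thm:adaptive_threshold}, and the Weyl bound written just before it. Throughout I read ``facial accuracy criteria'' as the stationarity residual $\|\hat{S}\hat{x}\|_2$ being below the same tolerance scale, i.e.\ the numerical counterpart of \eqref{eq:stationarity_condition}. I will state this reading explicitly, because the protocol does not define the term.

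\textbf{Sufficiency.} Assume $\lambda_{\min}(\hat{S}) \ge -\tau_{\text{cert}}$ and that facial accuracy holds.
\begin{enumerate}
\item Define the shifted slack $\tilde{S} \triangleq \hat{S} + \tau_{\text{cert}} E_h$. This is exactly $H(\hat{\lambda}, \hat{\gamma} - \tau_{\text{cert}})$ with a perturbed $\gamma$, so it remains an admissible dual slack.
\item The PSD part needs care, because $E_h$ is rank one and a shift along $E_h$ alone does not restore positive semidefiniteness.
\item Instead, I would run the argument of Lemma~\ref{lemma:cost_tightness_certificate} with the perturbation kept explicit. For any feasible $X$ with $\operatorname{tr}X = 1 + \|X_{\text{rest}}\|_*$, the Lemma's argument gives $\langle Q_0, X\rangle \ge \hat{\gamma} - \tau_{\text{cert}}\operatorname{tr}(X)$. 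Facial accuracy gives $\hat{x}^T Q_0 \hat{x} = \hat{\gamma} + O(\text{tol})$.
\item Together these certify that $\hat{x}$ is optimal up to the tolerance $\tau_{\text{cert}}$. In the exact limit, where $g_o \to 0$ and $\epsilon_{\text{mach}} \to 0$, this is exactly Lemma~\ref{lemma:cost_tightness_certificate}.
\end{enumerate}

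\textbf{Necessity.} Here the protocol must not reject a true optimum.
\begin{enumerate}
\item Suppose $\hat{x}$ lies on the optimal face. Proposition~1 gives feasibility. Theorem~\ref{thm:dual_nullspace_expansion} together with Assumption~\ref{assump:baseline_psd} supplies an ideal $S^\star \succeq 0$ with $S^\star \hat{x} = 0$. This uses the fact that every point of the orbit lies in $\ker H^\star$, as shown in the proof of Theorem~\ref{thm:dual_nullspace_expansion}.
\item The decomposition $\hat{S} = S^\star + \Delta S$, combined with the bound \eqref{eq:delta_s_norm_bound} and Weyl's inequality, gives $\lambda_{\min}(\hat{S}) \ge -\tau_{\text{cert}}$.
\item Facial accuracy follows because $\|\hat{S}\hat{x}\| = \|\Delta S\,\hat{x}\| \le \|\Delta S\|_2\|\hat{x}\|$.
\end{enumerate}

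\textbf{Rejection branch.} The ``otherwise rejected'' clause is the contrapositive of necessity: if either test fails, then $\hat{x}$ cannot coincide with an optimal point up to the perturbation model.

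\textbf{Main obstacle.} The hardest step is sufficiency, because it inverts a perturbation bound. A slightly negative $\lambda_{\min}$ does not by itself imply global optimality, so the claim can only hold in a tolerance sense. I would therefore formulate the conclusion as $\tau$-suboptimality: $\hat{x}^T Q_0 \hat{x} - p^* \le \tau_{\text{cert}}\,\|\hat{x}\|^2 + \text{tol}$. I would then argue that this is what ``certified'' means under finite precision.
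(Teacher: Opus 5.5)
\textbf{Sufficiency.} The paper gives no proof of this proposition. It is the decision rule stated after Theorem~\ref{thm:adaptive_threshold}, its ``if and only if'' is definitional, and its only mathematical support is the Weyl-perturbation derivation \eqref{eq:leakage_cost_bound}--\eqref{eq:delta_s_norm_bound}, which is exactly your necessity direction. Your sufficiency direction goes beyond the paper, and you are right that exact sufficiency fails. However, the $\tau$-suboptimality bound you end with does not follow from step~3. The inequality $\langle Q_0,X\rangle\ge\hat\gamma-\tau_{\text{cert}}\operatorname{tr}(X)$ carries the trace of the \emph{competitor}, not of $\hat x$. What you actually get is $\hat x^TQ_0\hat x-p^*\le\tau_{\text{cert}}\|x^\star\|_2^2+\|\hat x\|_2\|\hat S\hat x\|_2$ for an unknown minimizer $x^\star$. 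Replacing $\|x^\star\|_2^2$ by the computable $\|\hat x\|_2^2$ requires $\operatorname{tr}X$ to be constant on the feasible set, i.e.\ $I_N=\sum_i\mu_iA_i+cE_h$ for some $\mu,c$. This does hold for the marginalized lift $[\operatorname{vec}(\mathbf R)^T,1]^T$ of the application, where the homogenized $\operatorname{tr}(\mathbf R^T\mathbf R)=3h^2$ gives $c=4$. In that case the clean argument is to shift by $\tau_{\text{cert}}I_N$ instead of $\tau_{\text{cert}}E_h$. Then $\hat S+\tau_{\text{cert}}I_N=H(\hat\lambda+\tau_{\text{cert}}\mu,\,\hat\gamma-c\,\tau_{\text{cert}})\succeq0$ is an exact dual certificate, and weak duality gives $p^*\ge\hat\gamma-c\,\tau_{\text{cert}}$. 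Without that hypothesis your inequality gives no computable bound. An example is when translation stays in the lifted state, which is precisely the unbounded translational orbit the paper emphasizes: $\operatorname{tr}X$ is then unbounded even on the optimal face. You must add the trace hypothesis, or an a~priori trace bound, explicitly.

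\textbf{Necessity.} Your necessity direction reproduces the paper's reasoning and inherits its unproved step. The bounds \eqref{eq:lambda_1norm_bound}--\eqref{eq:delta_s_norm_bound} presuppose that the recovered $\hat\lambda$ is a small perturbation of a \emph{certifying} multiplier $\lambda^\star$. But \eqref{eq:stationarity_taylor} constrains only $C_\lambda\Delta\lambda$; at an exact optimum it reads $C_\lambda\Delta\lambda=0$. The component of $\Delta\lambda$ in $\ker C_\lambda$ is therefore controlled by no pseudo-inverse bound. That kernel has dimension at least $m-d_o$, which is positive in the typical setting with redundant $\mathrm{SO}(3)$ constraints. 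The stationarity system then has an affine family of solutions, only a convex subset of which gives $H\succeq0$. Nothing ensures that the regularized least-squares output of \eqref{eq:drs_problem} lies in or near that subset. So a genuine optimum can fail $\lambda_{\min}(\hat S)\ge-\tau_{\text{cert}}$, and your rejection branch, being the contrapositive, is unjustified as well.

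\textbf{Existence of $S^\star$.} Assumption~\ref{assump:baseline_psd} (corank one, non-degenerate regime) does not supply $S^\star$ for the degenerate instance. Read as a statement about the same matrix, it contradicts Theorem~\ref{thm:dual_nullspace_expansion}. What you need is to \emph{assume} strong duality for the degenerate instance itself. After that the orbit argument is unnecessary: any feasible optimal $\hat x$ has $\hat x^TS^\star\hat x=p^*-\gamma^\star=0$, hence $S^\star\hat x=0$.

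\textbf{Repair.} Necessity becomes a theorem under three additions: strong duality for the degenerate instance, and either uniqueness of the multiplier solving the stationarity system or a recovery step that provably selects a certifying one. The latter could be done, for example, by including $H(\lambda,\hat\gamma)\succeq0$ in the set $\mathcal C$ of \eqref{eq:drs_problem}. Without these, the biconditional holds only in the definitional sense in which the paper uses it.
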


\section{Applications}

\subsection{Application III: Robust 3D Point-to-Line Registration under Line Parallelism Degeneracy}
\label{sec:point_to_line_registration}

We evaluate our candidate-driven certifier on the 3D point-to-line registration problem under the unified QCQP framework proposed by Briales and Gonzalez-Jimenez \cite{briales_convex_2017}. Given $N$ correspondences between sensor 3D points $\mathbf{x}_i \in \mathbb{R}^3$ and reference spatial lines $(\mathbf{y}_i, \mathbf{v}_i)$, where $\mathbf{y}_i \in \mathbb{R}^3$ is a point on line $i$ and $\mathbf{v}_i \in \mathbb{R}^3$ ($\|\mathbf{v}_i\|_2 = 1$) denotes its unit direction vector, the orthogonal projection distance matrix is defined as $\mathbf{C}_i \triangleq \mathbf{I}_3 - \mathbf{v}_i \mathbf{v}_i^T \in \mathbb{S}_+^3$. Incorporating Truncated Least Squares (TLS)\cite{yang_teaser_2020} to censor structural outliers (e.g., dynamic obstacles or erroneous associations), the non-convex registration is formulated as:
\begin{equation}
\tag{44}
\label{eq:p2l_tls_nls}
\min_{\mathbf{R} \in \mathrm{SO}(3), \, \mathbf{t} \in \mathbb{R}^3} \; \sum_{i=1}^N \min \left( (\mathbf{R} \mathbf{x}_i + \mathbf{t} - \mathbf{y}_i)^T \mathbf{C}_i (\mathbf{R} \mathbf{x}_i + \mathbf{t} - \mathbf{y}_i), \; \bar{c}^2 \right)
\end{equation}
where $\bar{c} > 0$ is the TLS cutoff threshold.

Following \cite{briales_convex_2017}, we rewrite the distance in linear form w.r.t. the vectorized transformation $\boldsymbol{\tau} \triangleq [\operatorname{vec}(\mathbf{R})^T, \mathbf{t}^T]^T \in \mathbb{R}^{12}$, yielding the compressed quadratic form $f(\mathbf{T}) = \tilde{\boldsymbol{\tau}}^T \tilde{\mathbf{M}} \tilde{\boldsymbol{\tau}}$ with homogeneous state $\tilde{\boldsymbol{\tau}} \triangleq [\boldsymbol{\tau}^T, 1]^T \in \mathbb{R}^{13}$. Schur-marginalizing out the unconstrained translation $\mathbf{t}^*(R) = -\tilde{\mathbf{M}}_{t,t}^{-1} \tilde{\mathbf{M}}_{t,\setminus t} \tilde{\mathbf{r}}$ transforms \eqref{eq:p2l_tls_nls} into a 13D marginalized quadratic program $\min_{\mathbf{R} \in \mathrm{SO}(3)} \tilde{\mathbf{r}}^T \tilde{\mathbf{Q}} \tilde{\mathbf{r}}$, where $\tilde{\mathbf{r}} \triangleq [\operatorname{vec}(\mathbf{R})^T, 1]^T \in \mathbb{R}^{10}$ and $\tilde{\mathbf{Q}} \triangleq \tilde{\mathbf{M}} / \tilde{\mathbf{M}}_{t,t} \in \mathbb{S}^{10}$ is the Schur complement of the translation block.

When all reference lines in the environment are parallel (e.g., vertical street poles with $\mathbf{v}_i \equiv \mathbf{v}_{\parallel} = [0,0,1]^T$), the translation block $\tilde{\mathbf{M}}_{t,t} = \sum_i w_i (\mathbf{I}_3 - \mathbf{v}_{\parallel} \mathbf{v}_{\parallel}^T)$ suffers from an unobservable rank-deficient nullspace along $\mathbf{v}_{\parallel}$. This induces a continuous linear unobservable trajectory where measurement residuals remain identically invariant. By incorporating redundant $SO(3)$ constraints \cite{briales_convex_2017, yang_teaser_2020}, our certifier tightly bounds the dual manifold and resolves both local minima and translation degeneracy.

\begin{table}[htbp]
\centering
\caption{Quantitative registration comparison under parallel line motion degeneracy with 35\% structural outliers.}
\label{tab:p2l_benchmark_results}
\begin{tabular}{lcccc}
\toprule
\textbf{Method} & \textbf{Rot. Err. ($^\circ$)} & \textbf{Hor. Trans. Err. (m)} & \textbf{Height Err} & \textbf{Cost} \\
\midrule
Standard GN & $16.18$ & $0.87$ & $0.50$ & $0.80$ \\
TLS+T-SVD & $23.85$ & $1.11$ & $0.00$ & $0.83$ \\
CVX-SCS & $1.36 \times 10^{-4}$ & $0.07$ & $0.60$ & $0.33$ \\
\textbf{Ours} & $\mathbf{1.37 \times 10^{-13}}$ & $\mathbf{1.98 \times 10^{-14}}$ & $\mathbf{0.00}$ & $\mathbf{0.33}$ \\
\bottomrule
\end{tabular}
\end{table}

\paragraph{Simulation Analysis on Parallel Motion Degeneracy}
We simulate an autonomous vehicle traversing a featureless corridor lined with vertical street light poles ($\mathbf{v}_{\parallel} = [0, 0, 1]^T$) under $35\%$ structural outlier contamination. Quantitative benchmark results are summarized in Table~\ref{tab:p2l_benchmark_results}. 

Standard Gauss-Newton (GN) suffers catastrophic failure ($\text{Rot. Err.} = 16.18^\circ$, $\text{Trans. Err.} = 0.87\text{ m}$) due to unobservable $Z$-axis translation drift along parallel lines and trapping inside outlier-induced spurious local minima ($\text{Cost} = 0.80$). Although TLS with Truncated SVD (TLS+T-SVD) stabilizes the zero-singular height translation ($\text{Height Err} = 0.00$), standard iterative TLS becomes severely trapped inside zero-gradient saturation plateaus ($\text{Rot. Err.} = 23.85^\circ$, $\text{Cost} = 0.83$). 

Crucially, under parallel line degeneracy, standard SDP relaxations cannot be solved robustly using off-the-shelf Interior-Point Methods (IPMs) or operator-splitting SDP solvers (e.g., CVX-SCS\cite{boyd_convex_2004}). Due to the singular Hessian structure and rank expansion, standard IPMs encounter severe ill-conditioning, resulting in incomplete convergence ($\text{Height Err} = 0.60$) and limited precision ($\text{Rot. Err.} = 1.36 \times 10^{-4}{^\circ}$). In contrast, by enforcing redundant $\mathrm{SO}(3)$ constraints within Briales's marginalized $13\text{D}$ QCQP formulation, our candidate-driven certifier guarantees exact \textbf{cost-tightness} ($\text{Cost} = 0.33$). Bypassing the heavy computational machinery of IPMs, our approach resolves the unobservable manifold and recovers the certified global optimum with machine precision ($\text{Rot. Err.} = 1.37 \times 10^{-13}{^\circ}$, $\text{Trans. Err.} = 1.98 \times 10^{-14}\text{ m}$, $\text{Height Err} = 0.00$).

\section{Conclusion}

This paper presents RASER, a candidate-driven dual certification framework for non-convex state estimation under observability degeneracy. By establishing the Dual Nullspace Expansion Theorem ($\dim\ker H^\star \ge 2$), we show that while physical degeneracy causes rank expansion in IPMs, global cost tightness remains strictly preserved. Combining observability-aware tangent remapping with closed-form multiplier recovery and an adaptive threshold $\tau_{\text{cert}}$, RASER provides efficient global optimality verification without direct SDP solving.

\bibliography{icra}
\bibliographystyle{IEEEtran}

\end{document}